\newtheorem{theorem}{Theorem}{\bfseries}{\itshape} 
\newtheorem{assumption}{Assumption}{\bfseries}{\itshape} 
\renewcommand{\vec}{\mathbf}
\newcommand{\argmin}[1]{\underset{#1}{\operatorname{argmin}}\;}
\title{\LARGE \bf
Interleaving Optimization with Sampling-Based Motion Planning (IOS-MP): Combining Local Optimization with Global Exploration
}
\author{Alan Kuntz, Chris Bowen, and Ron Alterovitz% <-this % stops a space
\thanks{This research was supported in part by the National Science Foundation (NSF) under award IIS-1149965. Any opinions, findings, and conclusions or recommendations expressed in this material are those of the authors and do not necessarily reflect the views of NSF.}% <-this % stops a space
\thanks{Alan Kuntz, Chris Bowen, and Ron Alterovitz are with the Department of Computer Science,
        University of North Carolina at Chapel Hill, NC 27599-3175, USA
        {\tt\small \{adkuntz, cbbowen, ron\}@cs.unc.edu}}%
}
\begin{document}

\maketitle
\thispagestyle{empty}
\pagestyle{empty}

\begin{abstract}
	
Computing globally optimal motion plans for a robot is challenging in part because it requires analyzing a robot's configuration space simultaneously from both a macroscopic viewpoint (i.e., considering paths in multiple homotopic classes) and a microscopic viewpoint (i.e., locally optimizing path quality).
 We introduce Interleaved Optimization with Sampling-based Motion Planning (IOS-MP), a new method that effectively combines global exploration and local optimization to quickly compute high quality motion plans.
 Our approach combines two paradigms: (1) asymptotically-optimal sampling-based motion planning, which is effective at global exploration but relatively slow at locally refining paths, and (2) optimization-based motion planning, which locally optimizes paths quickly but lacks a global view of the configuration space.
 IOS-MP iteratively alternates between global exploration and local optimization, sharing information between the two, to improve motion planning efficiency.
We evaluate IOS-MP as it scales with respect to dimensionality and complexity, as well as demonstrate its effectiveness on a 7-DOF manipulator for tasks specified using goal configurations and workspace goal regions.

\end{abstract}

%%%%%%%%%%%%%%%%%%%%%
% INTRODUCTION
%%%%%%%%%%%%%%%%%%%%%
% !TEX root =  Kuntz_IOSMP.tex

\section{Introduction}

Robots are increasingly entering domains such as transportation, surgery, and home assistance where safe interaction with people is critical. 
This interaction motivates robots which are capable of planning high quality motions under short time horizons.
Unfortunately, the landscape of feasible motion plans in a robot's configuration space can be extremely complex, consisting of many local minima, with paths spanning multiple homotopic classes.
The complexity of this landscape means that an ideal motion planning algorithm must employ a macroscopic global view, considering paths in multiple homotopic classes, while also taking a microscopic local view, ensuring plans are as close to locally optimal as possible. 
Our work focusses on unifying these two perspectives by interleaving path refinement through local optimization, with global exploration through sampling-based motion planning.
In this way, we compute high quality, locally optimized plans while continuing to explore the global landscape for as long as time allows.

Sampling-based motion planning methods typically take the macroscopic view, drawing samples from the robot's entire configuration space to build a graph that progressively covers more and more of the space.
The global nature of these algorithms allows them to explore every relevant homotopic class eventually.
Many of these methods will also converge upon the globally optimal solution as the number of samples approaches infinity \cite{Karaman2011_IJRR}\cite{Gammell2015_ICRA}.
Unfortunately, the resulting paths may converge slowly, so in \emph{finite} time, they frequently return paths that are far from locally optimal, especially in higher dimensional problems \cite{Janson2015_IJRR}.

\begin{figure}[t]
  \centering
  \subfloat[First Iteration]{\label{fig:homotopy1}\includegraphics[width=0.33\linewidth]{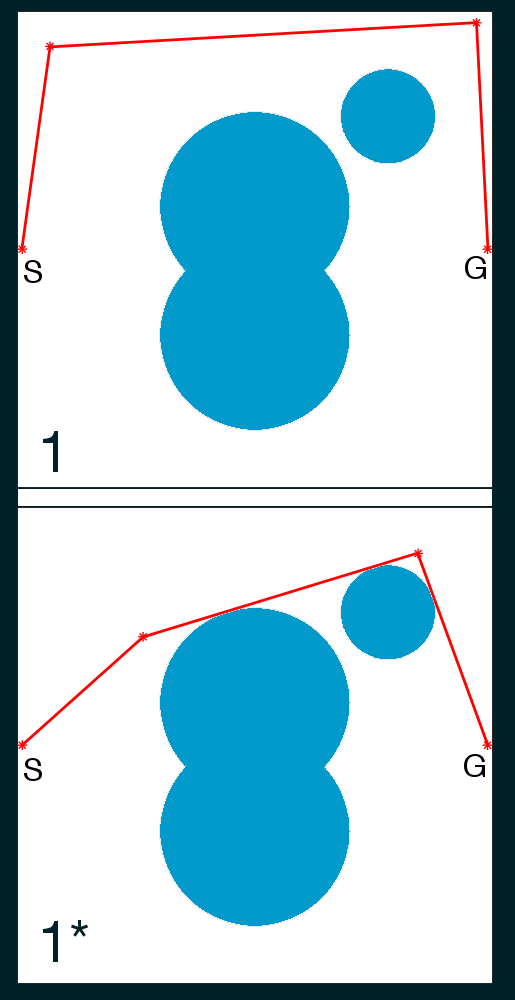}}
  \subfloat[Second Iteration]{\label{fig:homotopy2}\includegraphics[width=0.33\linewidth]{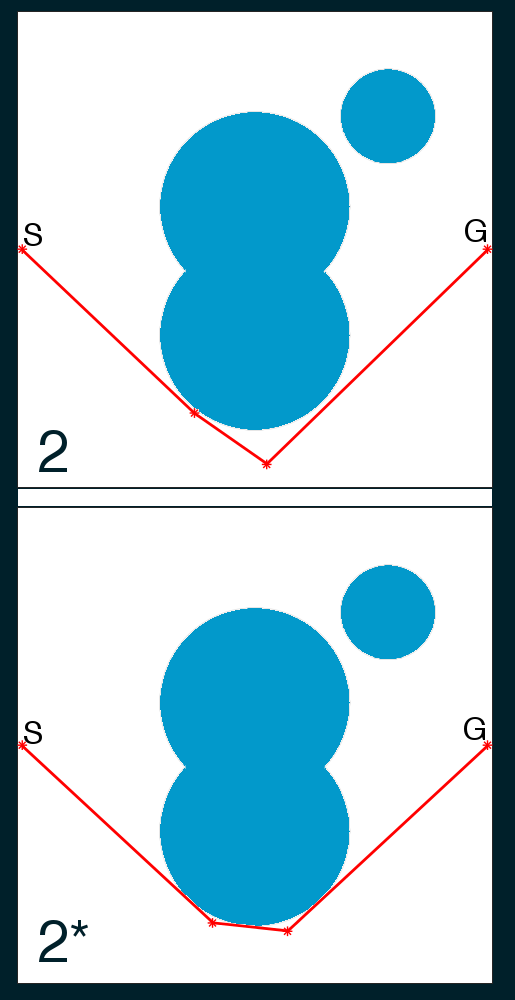}}
  \subfloat[Third Iteration]{\label{fig:homotopy3}\includegraphics[width=0.33\linewidth]{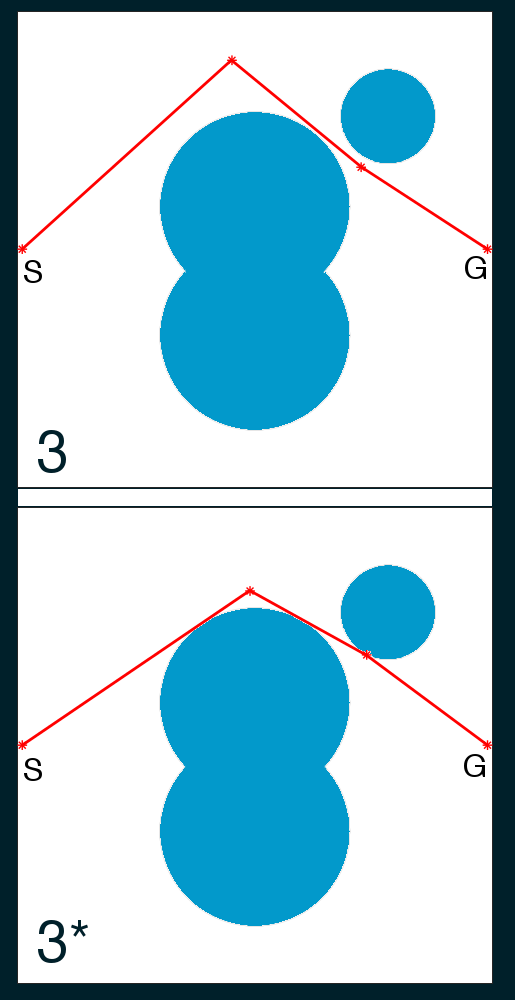}}
  \caption{Iterations of IOS-MP progress toward a globally optimal path. (a) The sampling-based motion planner finds the first path (top), which is then locally optimized (bottom). 
(b, c) In each subsequent iteration, the sampling-based motion planning step  finds a path that is shorter than the previously found best path (top), and this new path is then locally optimized (bottom) before continuing to the next iteration.}
    \label{fig:fig1}
\vspace{-12pt}
\end{figure}

Optimization-based motion planning methods take a more microscopic approach.
These methods quickly compute high quality paths by numerically optimizing an initial path, converging toward a local minimum \cite{Ratliff2009_ICRA}\cite{Schulman2013_RSS}\cite{Park2012_ICAPS}\cite{Kalakrishnan2011_ICRA}.
However, the resulting path quality of such methods is directly tied to the path initialization. 
Due to the inherently non-convex nature of motion planning with obstacles, for some initializations the resulting paths may be far from globally optimal, or the optimization may not find a solution at all.
This limitation can be partially circumvented through techniques such as restarting the optimization with multiple different initial paths (e.g., \cite{Schulman2013_RSS}), but such approaches are heuristically driven and typically provide no global guarantees.
Additionally, by treating each restart as independent, they potentially ignore useful pre-computed information (such as collision-free configurations) that could be valuable if shared across restarts.

Our new method, Interleaved Optimization with Sampling-based Motion Planning (IOS-MP), alternates between local optimization and global exploration, sharing information between the two, effectively combining the best of both of these paradigms (see Fig. \ref{fig:fig1}).
IOS-MP starts with global exploration by building a graph using an asymptotically optimal motion planner, such as $k$-nearest Probabilistic Roadmap (PRM*) \cite{Karaman2011_IJRR} or Batch Informed Trees (BIT*) \cite{Gammell2015_ICRA}, until it finds a collision free path (Fig. \ref{fig:fig1}(a) top).
IOS-MP then uses constrained local optimization based on an augmented Lagrangian method \cite{Wright_1999} to refine the path found by the sampling-based method (Fig. \ref{fig:fig1}(a) bottom).
The algorithm then iterates between (1) resuming the sampling-based motion planner until a new better path is found and (2) running constrained local optimization on this new path (see Fig. \ref{fig:fig1}(b),(c)).
The sampling-based motion planning phase of each iteration explores globally, discovering other homotopic classes in configuration space, as well as escaping local minima in the path cost landscape.
The constrained local optimization phase in turn allows the method to provide a high quality local solution at the end of each iteration. 
For efficiency, information is shared by both of the phases at each iteration, with the sampling-based method seeding the local optimization with improving initial solutions, and the local optimization passing potentially valuable new vertices and edges to the sampling-based method.

IOS-MP is an \emph{anytime} motion planning algorithm, in that the algorithm can be stopped at any time (after iteration 1) and return a locally optimized path, and running the algorithm for more time will return locally optimized paths that asymptotically approach a globally optimal path. 
The framework we introduce is generalizable; while our implementation uses augmented Lagrangian optimization and PRM* or BIT*, other local optimization algorithms and other sampling-based planning algorithms can be substituted in as long as the formulations and constraints described in this paper can be applied.
The contribution of this paper is in describing and evaluating a framework for constructing a motion planning algorithm which leverages both local path optimization and global exploration.
This allows it to provide higher quality paths earlier than asymptotically optimal sampling-based motion planning algorithms alone can provide, while providing the guarantees---namely completeness and asymptotic optimality---which are not provided by most optimization-based motion planning algorithms.

In this paper we optimize trajectories with respect to path length. 
We demonstrate IOS-MP's efficacy in simulated environments of varying dimensionality and complexity as well as for a 7 degree of freedom (DOF) manipulator performing tasks defined by both goal configurations and workspace goal regions.

%%%%%%%%%%%%%%%%%%%%%%%%%%%%
% RELATED WORK
%%%%%%%%%%%%%%%%%%%%%%%%%%%%
% !TEX root =  Kuntz_IOSMP.tex

\section{Related Work}

In sampling-based motion planning, a graph data structure is constructed incrementally via random sampling providing a collision-free tree or roadmap in the robot's configuration space.
Early versions of these algorithms provide probabilistic completeness, i.e., the probability of finding a path, if one exists, approaches one as the number of samples approaches infinity.
Classic examples include the Rapidly-exploring Random Tree (RRT) \cite{LaValle2001_IJRR} and the Probabilistic Roadmap (PRM) \cite{Kavraki1996_TRA} methods.
Adaptations of these algorithms can provide asymptotic optimality guarantees, wherein the path cost (e.g., path length) will approach the global optimum as the number of algorithm iterations increases.
Examples include RRT* and PRM* \cite{Karaman2011_IJRR} where the underlying motion planning graph is either rewired or has asymptotically changing connection strategies.
Other asymptotically optimal algorithms include Batch Informed Trees (BIT*) \cite{Gammell2015_ICRA} in which samples are processed in batches and Fast Marching Trees (FMT*) \cite{Janson2015_IJRR} which grows a tree in cost-to-arrive space. 
Other work such as the cross-entropy method \cite{Kobilarov2012_IJRR} has investigated the distributions from which samples and trajectories are taken.

Optimization-based motion planning algorithms perform numerical optimization in a high dimensional trajectory space.
Each trajectory is typically represented by a vector of parameters representing a list of robot configurations or controls.
A cost can be computed for each trajectory, and the motion planner's goal is to compute a trajectory that minimizes cost.
In the presence of obstacles and other constraints, the problem can be formulated and solved as a numerical optimization problem.
CHOMP \cite{Ratliff2009_ICRA} takes an initial trajectory and performs gradient descent. Traj-Opt \cite{Schulman2013_RSS} uses sequential quadratic programming with inequality constraints to locally optimize trajectories. ITOMP combines optimization with re-planning to account for dynamic obstacles \cite{Park2012_ICAPS}.
These methods typically produce high quality paths but are frequently unable to escape local minima, and as such are subject to initialization concerns.
To avoid local minima, some methods inject randomness into the system \cite{Kalakrishnan2011_ICRA,Carriker1990_ICRA}.

Several methods aim to bridge the gap between sampling-based methods and optimization. 
Some methods use paths generated by global planners and refine them using shortcutting or smoothing methods in post processing or adaptively  \cite{Kavraki1996_TRA}\cite{Pan2012_IJRR}\cite{Hauser2010_ICRA}\cite{Luna2013_ICRA}\cite{Quinlan1993_ICRA}.
Recent work has begun considering ways to integrate ideas from optimization-based motion planning with sampling-based motion planning.
GradienT-RRT moves vertices to lower cost regions using gradient descent during the construction of an RRT \cite{Berenson2011_ICRA}.
More recently, RABIT* uses CHOMP to locally optimize in-collision edges during BIT* planning to bring them out of collision and effectively find narrow passages \cite{Choudhury2016_ICRA}.
In contrast, our method is using the local optimizer not to find narrow passages, but to improve the overall quality of the paths found by the sampling-based planner, while relying on the sampling-based planner's completeness property to discover narrow passages.

IOS-MP aims to more thoroughly integrate optimization with sampling-based motion planning by interleaving them to achieve fast motion planning with asymptotic optimality.

%%%%%%%%%%%%%%%%%%%%%%%
% PROBLEM FORMULATION
%%%%%%%%%%%%%%%%%%%%%%%
% !TEX root =  Kuntz_IOSMP.tex

\section{Problem Definition}

Let $C$ be the configuration space of the robot. 
Let $\vec{q} \in C$ represent a single robot configuration and $\vec{p} = \{\vec{q}_0,\vec{q}_1, \dots, \vec{q}_n\}$ be a continuous path in configuration space represented in a piecewise manner by a sequence of configurations. 
In this paper, we minimize the path with respect to the length, where we define $\texttt{length}(\vec{p})$ to be the sum of Euclidean distances in configuration space along the path.
We use the sum of Euclidean distances because it is a commonly used metric for path length and because it satisfies the triangle inequality property required by asymptotically optimal sampling-based motion planners (unlike other metrics such as sum of squared distances). 
While we use path length for our optimization objective, certain other cost functions that are continuous, differentiable, and do not violate the triangle inequality could be used instead.

In the robot's workspace are obstacles that must be avoided.
Each obstacle may be composed of a set of obstacle primitives, and let the set of primitives composing all obstacles be $O$. 
Let $C_\textrm{free} \subseteq C$ be the collision free subspace and $C_\textrm{obs} \subseteq C$ be the in-collision subspace of $C$ based on obstacle primitives $O$.
In the context of this work, we are considering static environments, such that $C_\textrm{obs}$ is unchanging as a function of time.
A path is collision-free if each edge $(\vec{q}_i, \vec{q}_{i+1}), i=0,\ldots,|\vec{p}|-1$, does not intersect an obstacle primitive $o \in O$. 
Formally, we define $\texttt{clearance}(\vec{q}_i, \vec{q}_{i+1}, o)$ as the signed squared distance between the path \emph{edge} $(\vec{q}_i, \vec{q}_{i+1})$ and obstacle primitive $o$ (where negative values correspond to obstacle penetration distance); a collision-free path is a path for which each edge has non-negative clearance.
This requires that the obstacle and robot representations allow for the computation, either analytically or numerically, of the signed distance between the robot and the obstacle primitive. 
This property holds for robots and obstacles represented using collections of bounding spheres or bounding capsules (volumes defined by a sphere swept along a straight line segment) as well as point clouds and other representations, which are common representations used in the literature and useful in practice.

The optimal motion planning problem then becomes finding a collision-free path from $\vec{q}_\textrm{start}$ to $\vec{q}_\textrm{goal} \in C_\textrm{goal}$ that minimizes cost, where $C_\textrm{goal}$ is the (possibly singleton) set of goal configurations. This optimal motion planning problem can be formulated as a nonlinear, constrained optimization problem:
\begin{equation}
\begin{array}{ll}
\multicolumn{2}{l}{\displaystyle{\vec{p}^{*} = \argmin{\vec{p}} \texttt{length}(\vec{p})}} \\
\mbox{Subject to:}                                         & \ \\
\hspace{0.2in}   \texttt{clearance}(\vec{q}_i, \vec{q}_{i+1}, o) \geq 0, \hspace{2pt}     & 0 \le i < |\vec{p}|, \hspace{2pt} \forall \hspace{2pt} o \in O \\
\hspace{0.2in} \vec{g}(\vec{p}) \geq 0, \hspace{16pt}    & \forall \hspace{2pt} \vec{g} \in J \\
\hspace{0.2in}   \vec{q}_0 = \vec{q}_\textrm{start}     & \ \\
\hspace{0.2in}   \vec{q}_{|\vec{p}|} = \vec{q}_\textrm{goal}     &  \vec{q}_\textrm{goal} \in C_\textrm{goal}\ \\
\end{array}
\label{eqn:PlanningFormulation}
\end{equation}
where $J$ is a set of generalized inequality constraints that are specific to the problem and robot (e.g., to represent joint angle limits for a manipulator), and where $\vec{p}^*$ is the optimal motion plan.
Let $K$ be the set of inequality constraints implied by $O$ for obstacle avoidance.
The set of all inequality constraints then becomes $I = J \bigcup K$.
IOS-MP is an iterative algorithm for efficiently solving this problem such that the solution asymptotically approaches $\vec{p}^{*}$.

%%%%%%%%%%%%%%%%%%%%%%%%%%%%
% METHOD
%%%%%%%%%%%%%%%%%%%%%%%%%%%%
% !TEX root =  Kuntz_IOSMP.tex

\section{Method}

IOS-MP integrates ideas from both sampling-based and optimization-based motion planning. The method interleaves path optimization steps with graph expansion steps to achieve fast convergence.

\subsection{Overview}

The top level algorithm for IOS-MP, Alg. \ref{alg:top}, runs in an anytime manner, iterating as time allows and storing the best path found up to any time.

\IncMargin{0.5em}
\begin{algorithm}
\KwIn{start configuration $\vec{q}_\mathrm{start}$, obstacle set $O$, time limit $t$}
\KwOut{motion plan $\vec{p}^{*}$}
$G \leftarrow (\{\vec{q}_\mathrm{start}\}, \emptyset)$ \\
Best cost $c \leftarrow \infty$ \\
$\vec{p}^{*} \leftarrow \emptyset$\\
\While{\emph{time elapsed} $\le t$} {
$\vec{p} \leftarrow \texttt{graphExpansionStep}(O, G, c, t)$\\
$c \leftarrow \texttt{cost}(\vec{p})$\\
$\vec{\hat{p}} \leftarrow \texttt{optimizationStep}(\vec{p})$\\
$c \leftarrow \texttt{cost}(\vec{\hat p})$\\
$G \leftarrow G \bigcup \vec{\hat p}$\\
$\vec{p}^{*} \leftarrow \hat{\vec{p}}$\\
}
\caption{IOS-MP}
\label{alg:top}
\end{algorithm}
\DecMargin{0.5em}

In the first step of each iteration, IOS-MP executes a sampling-based motion planner to expand the graph $G$ until a new path is found that has cost lower than $c$. The sampling-based motion planner returns the new path $\vec{p}$ and updates $c$. 
In the second step of each iteration, IOS-MP executes a local optimization method to locally optimize $\vec{p}$. The method saves the optimized path $\hat{\vec{p}}$ as the best new motion plan found. It also adds the configurations and segments defining $\hat{\vec{p}}$ as new vertices in $G$. 
The algorithm then iterates, returning to global exploration with the sampling-based motion planner, but seeded with vertices and edges generated by both the prior random sampling and the local optimization. 

\subsection{Global Exploration using Sampling-based Motion Planning}
\label{sec:sampling}

The global exploration step uses a sampling-based motion planner to expand a graph until a new path---spanning from the start configuration to any goal configuration---is found that is of lower cost than any previously found path. The sampling-based motion planner maintains a graph $G = (V, E)$, where $V$ is a set of vertices which represent collision-free configurations of the robot and $E$ is a set of edges, where an edge represents a collision-free transition between two robot configurations.

To expand the graph $G$, our method is designed to use an asymptotically optimal sampling based motion planner such as $k$-nearest Probabilistic Roadmap (PRM*) \cite{Karaman2011_IJRR}. PRM* samples random configurations in the robot's configuration space, locates their $k$ nearest neighbors (where $k$ changes as a function of the number of vertices in the graph), and attempts to connect the configurations to each of its neighbors (connection step).
In each graph expansion step, PRM* executes until a path better than the current best is found, at which point the optimization step begins.
We require that the algorithm randomly samples and attempts to connect at least one vertex in between consecutive optimization steps, to ensure that the global exploration continues.
We refer to IOS-MP with $k$-nearest PRM* for its sampling-based planning step as IOS-MP (PRM*).

IOS-MP can alternatively be used with other asymptotically optimal motion planners, such as Batch Informed Trees (BIT*) \cite{Gammell2015_ICRA}. 
BIT* operates by processing samples in batches.
From a batch of samples it builds a tree using a graph search heuristic during tree construction until a solution is found or the tree can no longer be expanded.
For the next batch, it limits its sampling to the subspace in which a solution of higher quality could be found. 
For IOS-MP with BIT*, the usual BIT* algorithm executes for a short amount of time without interruption ($\le 0.2$ seconds) to take advantage of its batching properties. After the short execution, the best path found by BIT* is evaluated against the previous best, and if it has improved, it is optimized. 
We refer to IOS-MP with BIT* for its  sampling-based planning step as IOS-MP (BIT*).
In Sec.\ \ref{sec:results}, we show results for both when PRM* and when BIT* are used within IOS-MP.

\subsection{Local Optimization using an Augmented Lagrangian Method}

For the local optimization step, we use a nonlinear constrained optimization method called the augmented Lagrangian (AL) method to locally optimize the path. 
AL is similar to standard Lagrangian methods, in that it utilizes Lagrangian multipliers, but differs in that it adds additional quadratic constraint terms.
AL is also similar to penalty-based optimization methods, but by introducing explicit Lagrange multiplier estimates at each step of the optimization, in practice it is able to reduce ill conditioning \cite{Wright_1999}.

We apply AL to the nonlinear constrained optimization problem in \eqref{eqn:PlanningFormulation}, where the initial value for $\vec{p}$ is the most recent path found by the sampling-based method. 
The AL method iteratively minimizes the augmented Lagrangian function, $\mathcal{L}_A$ defined in \eqref{eq:L} and \eqref{eq:psi}, based on each constraint $g_i$ in the set of constraints $I$, and an iterative approximation of the Lagrangian multipliers $\vec{\lambda}$.

\begin{equation}\label{eq:L}
\mathcal{L}_A(\vec{p}, \bm{ \lambda }^k , \mu_k) = f(\vec{p}) + \sum_{i \in I} \psi(g_i(\vec{p}), \lambda_i^k, \mu_k) \enspace .
\end{equation}
\begin{equation}\label{eq:psi}
\psi(\gamma, \sigma, \mu) = \begin{cases} 
- \sigma \gamma + \frac{1}{2\mu}\gamma^2 &\enspace \enspace  \gamma-\mu \sigma \le 0  \\
-\frac{\mu}{2}\sigma^2 & \enspace \enspace\mbox{otherwise}\\
\end{cases}
\enspace .
\end{equation}

At the $k^{th}$ step of the AL algorithm, outlined in Alg. \ref{alg:AL}, $\mathcal{L}_A$ is minimized with respect to the trajectory $\vec{p}_k$. 
In our implementation we use gradient descent with line search for this minimization.

\begin{algorithm}
\KwIn{$\textbf{p}_0$, $\mu_0 > 0$, $\tau > 0$, $\bm{\lambda}_0$}
\KwOut{optimized path $\textbf{p}$}
$k \leftarrow 0$\\
\While{$\|\bigtriangledown_\textbf{p}\mathcal{L}_A\| \ge \tau$} {
$\vec{p}_{k+1}:= $minimize w.r.t. $\vec{p}_k$, $\mathcal{L}_A(\vec{p}_k, \bm{\lambda}, \mu)$ \;
Update $\bm{\lambda}$\;
$\mu := \mu \cdot \mu_{up} \mbox{, where } \mu_{up} \in (0, 1)$\;
$k \leftarrow k+1$\;
}
Return {$\textbf{p}_{k-1}$}\;
\caption{The Augmented Lagrangian method}
\label{alg:AL}
\end{algorithm}

After the minimization, for each inequality constraint $i \in I$, its associated Lagrangian multiplier $\lambda_i$ is then updated using the following formula.

\begin{equation}\label{eq:lambda}
\lambda^{k+1}_i = \texttt{max}\left(\lambda^k_i - \frac{g_i(\vec{p}_k)}{\mu_k}, 0\right)
\end{equation}

Penalty multiplier $ \mu$ is then updated by multiplying it by a parameter $\mu_{up}: 0 < \mu_{up} < 1$.
The algorithm then updates the augmented Lagrangian function and iteration continues until convergence of $\vec{p}$.

For IOS-MP, $f(\vec{p})$ is path length, and the set of inequality constraints $I$ in \eqref{eqn:PlanningFormulation} is the union of the problem-specific inequality constraints $J$ and the obstacle avoidance constraints. For obstacle avoidance, the set of inequality constraints $K$ consists of a constraint per obstacle primitive per pair of adjacent configurations in $\vec{p}$.
For a path $\vec{p}$ consisting of $k$ configurations, there will be $(k-1) \cdot |O|$ inequality constraints related to obstacle avoidance, as we require the edge between configurations to be collision free.
So for constraint $g_i$, associated with obstacle primitive $o$ and configurations $\vec{q}_k$ and $\vec{q}_{k+1}$, $g_i(\vec{p}) = \texttt{clearance}(\vec{q}_k, \vec{q}_{k+1}, o)$.

In the case of motion planning from one configuration to another configuration, the AL method does not update these two configurations but rather updates only the configurations between the start and goal.
In this way, the equality constraints $\vec{q}_0 = \vec{q}_\textrm{start}$ and $\vec{q}_{|\vec{p}|} = \vec{q}_\textrm{goal}$ are trivially enforced.
In the case of goal regions, an additional inequality constraint is added to guarantee that the goal configuration remains within the goal region. Also in practice, to guarantee timely exit, we enforce maximum iteration counts for both the internal minimization and the outside AL loop.

\subsection{Analysis}
\label{sec:analysis}

We show that IOS-MP (PRM*), a variant of IOS-MP that uses $k$-nearest PRM* for its sampling-based planning step, is asymptotically optimal. 
To ensure that paths computed asymptotically approach the globally optimal path, we must ensure that adding vertices to the graph based on the optimization step does not affect the asymptotic optimality properties of the sampling-based motion planner.
We do this by making a distinction between vertices added by the sampling-based method and vertices added by the optimization.
We only count vertices added by the sampling-based method toward the $k$-nearest neighbor count when adding edges.
Because of this distinction, we can show that IOS-MP (PRM*) produces a valid supergraph of what PRM* would have produced, were the optimization steps of IOS-MP (PRM*) not performed.
The behavior, as time progresses, of IOS-MP (PRM*) can be conceptualized as the behavior of PRM* with \emph{finite} time interruptions occurring periodically during which the optimization steps are performed.
To ensure that the asymptotic optimality of PRM* is retained by IOS-MP (PRM*), the following assumptions must hold:

\begin{assumption} \label{as:two}
The optimization step is guaranteed to exit in finite time.
\end{assumption}

\begin{assumption} \label{as:five}
The order and configurations randomly sampled by the graph construction algorithm are unchanged from what they would be were there no optimization steps occurring.
\end{assumption}

\begin{theorem} \label{th:io}
For any finite running time $\tau$, by which an unmodified $k$-nearest PRM* motion planning algorithm would have produced graph $G_u$, there exists a finite running time $\hat{\tau}$, by which IOS-MP (PRM*) produces a graph $G_s$ which is a valid supergraph of $G_u$.
\end{theorem}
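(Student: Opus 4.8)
The plan is to formalize the informal picture already given in Sec.~\ref{sec:analysis}: IOS-MP (PRM*) is nothing more than $k$-nearest PRM* run on the same sample stream, but paused periodically to insert optimization vertices and edges that the baseline never produces. I would prove the statement by exhibiting an explicit coupling between a run of unmodified PRM* and a run of IOS-MP (PRM*), then arguing that the sampling-generated part of $G_s$ is a verbatim copy of $G_u$ while the optimization steps only enlarge the graph. Both directions rely on the two stated assumptions, so I would begin by fixing $\tau$, letting $G_u$ be the PRM* graph at that time, and letting $n$ denote the number of configurations PRM* has sampled and attempted to connect by time $\tau$.

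First I would set up the coupling. By Assumption~\ref{as:five}, the configurations sampled by IOS-MP (PRM*) and the order in which they are processed are identical to those of an uninterrupted PRM* run, so I can feed the \emph{same} sample sequence $\vec{q}^{(1)}, \dots, \vec{q}^{(n)}$ to both algorithms. I would then argue by induction on $j = 1, \dots, n$ that after IOS-MP (PRM*) has processed the $j$-th sample, the subgraph of its working graph $G$ induced by the sampling-added vertices equals the PRM* graph after $j$ samples. The inductive step is where the vertex-accounting distinction does the work: because only sampling-added vertices are counted toward the vertex total $n$ that determines the connection parameter $k(n)$, and because the $k$-nearest-neighbour set of a newly added sampling vertex is computed \emph{among sampling vertices only}, the candidate neighbours, the value of $k$, and hence the set of edges added coincide in the two runs. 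The optimization vertices sitting in $G$ are simply invisible to this computation and cannot perturb any connection PRM* would have made.

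Given the induction, the sampling-induced subgraph of $G_s$ contains $G_u$. To promote this to the full graph I would observe that the optimization step, together with the union update $G \leftarrow G \cup \hat{\vec{p}}$ in Alg.~\ref{alg:top}, is \emph{monotone}: it only inserts vertices and edges and never deletes any. Consequently every vertex and edge of $G_u$ survives in $G_s$, so $G_u$ is a subgraph of $G_s$; since $G_s$ additionally satisfies the edge-validity condition of a roadmap (each edge, including those contributed by the clearance-constrained optimizer, is a collision-free transition), it is a \emph{valid} supergraph, which is the required conclusion.

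Finally I would verify that the wall-clock time $\hat{\tau}$ at which IOS-MP (PRM*) completes the same $n$ samples is finite. The requirement that at least one vertex is sampled and connected between consecutive optimization steps bounds the number of optimization steps occurring within the first $n$ samples by $n$; by Assumption~\ref{as:two} each such step terminates in finite time; and the $n$ sampling operations themselves consume the same finite time as in PRM*. Summing finitely many finite contributions yields a finite $\hat{\tau}$. The main obstacle is the inductive step of the second paragraph: the entire argument hinges on showing that injecting foreign vertices into $G$ leaves both the neighbour queries and the connection count $k(n)$ untouched, so I would need to state carefully that the nearest-neighbour structure is maintained over the sampling vertices alone, keeping the two runs in lockstep.
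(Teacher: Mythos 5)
Your proposal is correct and follows essentially the same route as the paper's own proof: both fix the sample stream via Assumption~\ref{as:five}, use the rule that optimization-added vertices do not participate in the $k$-nearest-neighbor accounting to show every edge of $G_u$ is also created in $G_s$, note that optimization steps only monotonically add (valid, collision-free) vertices and edges, and conclude finiteness of $\hat{\tau}$ by bounding the number of optimization steps by the number of samples and invoking Assumption~\ref{as:two}. Your explicit induction/coupling phrasing is a cleaner packaging of the paper's narrative argument, but it is not a different proof.
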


\begin{proof}
Consider the ordered list $V_u = \{v_0, v_1, \dots, v_k\}$ where $V_u$ represents the order in which vertices are sampled during the construction of graph $G_u = (V_u, E_u)$ up to time $\tau$.
Also consider the ordered list $V_u^i$ as the sub list of $V_u$ at iteration $i$.
Additionally, consider the set of edges $E_u^i$ as the subset of $E_u$ at iteration $i$.
As we are only considering static environments, $E_u^i$ is dependent only on the order and presence of each vertex $v \in V_u^i$ at the time of connection.
That is to say that the edges are not affected by the specific \emph{time} at which a vertex was added, but rather only on the existence of the set of vertices in the graph when the connection is attempted.

Now consider IOS-MP (PRM*)'s construction of graph $G_s = (V_s, E_s)$, and specifically the ordered construction of $V_s^i$ and $E_s^i$, as above.
By assumption \ref{as:five}, each vertex $v_s^i$ will be identical in order and value, up to its counterpart vertex $v_u^i$ in $V_u^i$.
Also, as a result of the connection strategy detailed above that treats optimization-added vertices as distinct when computing $k$-nearest neighbors, $E_s^i$ will contain all of the edges present in $E_u^i$.

Next, consider the sequence of steps taken by the sampling-based algorithm alone during construction up to iteration $i$. 
An example of such a sequence could be: sample vertex 0, sample vertex 1, connection step 0, sample vertex 2, sample vertex 3, connection step 1, \dots, sample vertex $i$.
Note that the frequency of a connection step is dependent on the specifics of the implementation but happens no more than once per sample.

The corresponding sequence of steps taken by IOS-MP (PRM*) will look very similar, for example: sample vertex 0, sample vertex 1, connection step 0, optimization step 0, sample vertex 2, sample vertex 3, connection step 1, optimization step 1, \dots, sample vertex {i}.
Note also that we require there to be at most one optimization step present between consecutive connection steps (see Sect.~\ref{sec:sampling}).
Also, by the connection strategies detailed above, the set of vertices and edges after an optimization step is a superset of the set of vertices and edges prior to the optimization step, and because we only add the vertices and edges back if they compose a valid path, it is a valid graph.

Let the time required to sample vertex $k$ be $\tau_s^k$, and the time required for connection step $j$ be $\tau_c^j$.
The time required to construct the graph up to $i$ sampled vertices, $\tau_g^i$ is then:
\begin{equation}
\tau_g^i = \sum_{n=0}^i \tau_s^n + \sum_{m = 0}^j \tau_c^m \mbox{ where } j \le i \enspace .
\end{equation}

Now consider the time required to complete the $k^{th}$ optimization step, $\hat{\tau}_o^k$.
This time will be finite by assumption \ref{as:two}.
The aggregate time then, to perform IOS-MP (PRM*) up to sampled vertex $i$ then becomes:
\begin{equation}
\hat{\tau}_g^i = \sum_{n=0}^i \tau_s^n + \sum_{m = 0}^j \tau_c^m + \sum_{p = 0}^k \hat{\tau}_o^p \mbox{ where } k \le j \le i \enspace .
\end{equation}
As such, the aggregate time required of IOS-MP (PRM*) up to sampling iteration $i$ will be a finite sum of finite times, and subsequently finite itself.

Therefore, there exists a finite time $\hat{\tau}$, where IOS-MP (PRM*)'s underlying sampling of vertices during construction of $V_s$ as a superset of $V_u$ where the vertices which are present in $V_u$ were added to $V_s$ in the same order as in $V_u$, and the connections $E_s$ were constructed as a superset of the edges $E_u$.
This results in a graph $G_s$ which is a valid supergraph of $G_u$ produced in finite time.
\end{proof}

For any finite time, IOS-MP (PRM*) will produce, in a longer finite time, a supergraph of the planner which would have been produced by PRM* alone, by Thm. \ref{th:io}.
As such, any path that would have been present in PRM*'s graph will also eventually be present in IOS-MP (PRM*)'s graph in finite time.IOS-MP (PRM*) will therefore produce, in finite time, a path that is equal or better than any path that would be produced by the PRM* alone in finite time.
It then follows naturally that as PRM* is both probabilistically complete and asymptotically optimal \cite{Karaman2011_IJRR}, IOS-MP (PRM*) will also be probabilistically complete and asymptotically optimal.

It should be noted that the supergraph assumption does not necessarily hold for IOS-MP (BIT*) in our implementation, and as such we cannot claim that IOS-MP (BIT*) is asymptotically optimal as a result of this proof, but we note that in most cases it works comparably or better than IOS-MP (PRM*) as shown in the next section.

%%%%%%%%%%%%%%%%%%%%%%%%%%%%
% RESULTS
%%%%%%%%%%%%%%%%%%%%%%%%%%%%
% !TEX root =  Kuntz_IOSMP.tex

\section{Results}
\label{sec:results}

\begin{figure*}[!t]
	\includegraphics[width=1\linewidth]{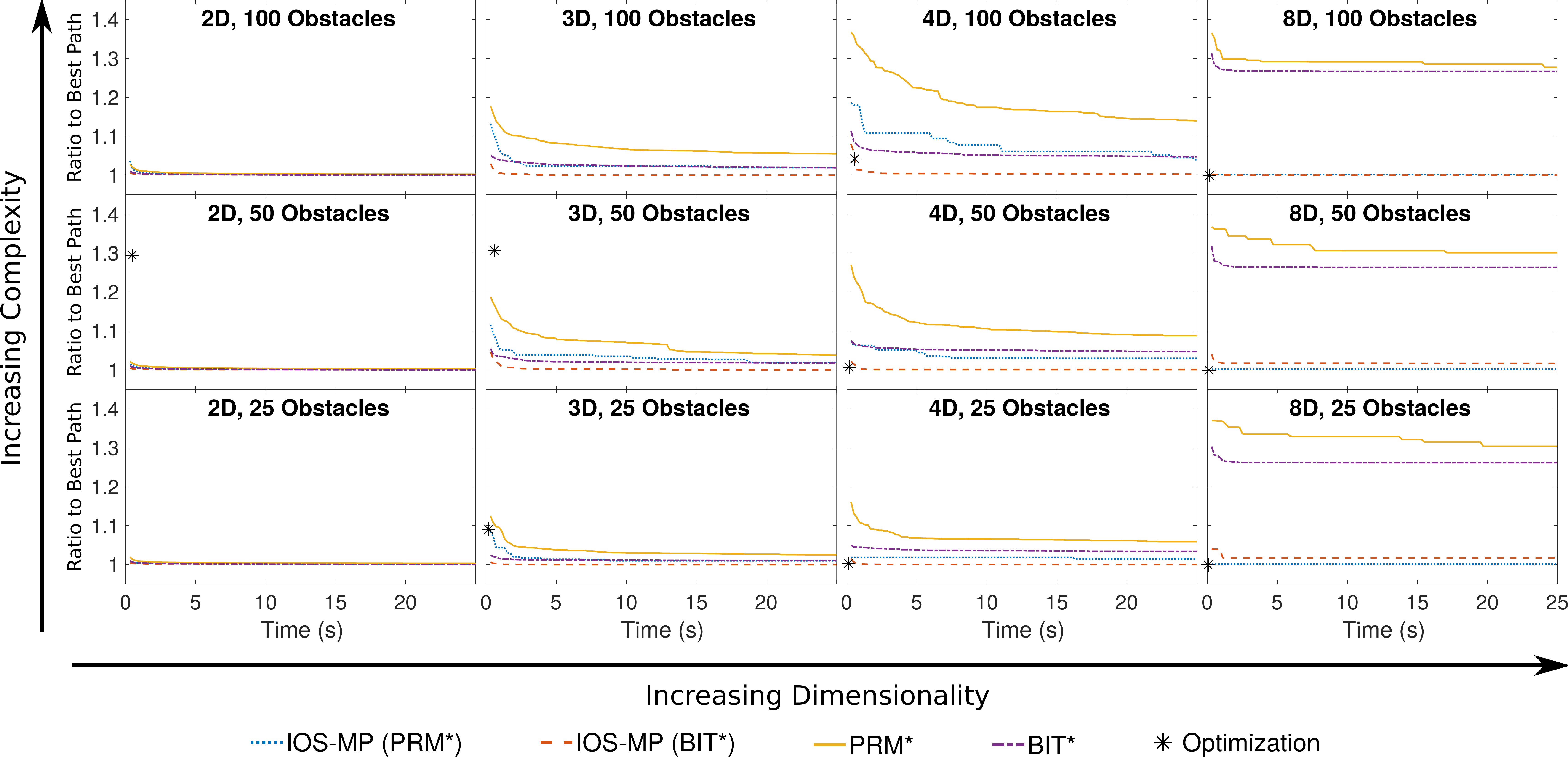}
	\caption{Comparison of methods in randomized environments of varying complexity and dimensionality. In the cases where the asterisk denoting the results from the local optimization is not shown, the value is above the upper end of the plot, between $1.45$ and $1.55$}
    \label{fig:resultsAll}
\end{figure*}

We evaluate the performance of IOS-MP with respect to configuration space dimensionality and varying environmental complexity, as well as demonstrate performance on simulated 7-DOF manipulation tasks based on the Fetch robot \cite{Fetch}. We evaluated two versions of IOS-MP: (1) IOS-MP (PRM*), which uses PRM* as the underlying sampling-based planner, and (2) IOS-MP (BIT*), which uses BIT* as the underlying sampling-based planner. We used implementations of PRM* and BIT* from the Open Motion Planning Library (OMPL) \cite{OMPL} for the sampling-based motion planning step of IOS-MP. We compared our two variants of IOS-MP with methods based solely on sampling-based motion planning (PRM* and BIT* as implemented in OMPL) or based solely on optimization (an augmented Lagrangian constrained optimization).
All experiments were performed on a system with an 8 core 3.2GHz Intel Xeon E5-2667 processor.

\subsection{Dimensionality and Complexity Scaling}
To evaluate how the method scales with respect to both configuration space dimensionality and environmental complexity, we evaluate IOS-MP for a point robot in a unit box of dimension $d$. We varied $d$ from 2 to 8 and varied the number of obstacle primitives from as few as 25 to as many as 100. 
In each environment of dimension $d$ we randomly placed obstacles in the form of hyperspheres of dimension $d$, i.e., circles for 2D, spheres for 3D, and hyperspheres for 4D and 8D. The radii of each obstacle was a random number sampled from a uniform distribution in the range [0.05,0.2]. We defined the start and goal configurations for the point robot as points at the centers of opposite faces of the box. 
In the optimization formulation for \eqref{eqn:PlanningFormulation}, we used inequality constraints for clearance from the hypersphere obstacles.
An example environment for 2D is shown in Fig.\ \ref{fig:fig1}.

We show the relative performance of IOS-MP (PRM*), IOS-MP (BIT*), PRM*, BIT*, and Optimization (based on a local augmented Lagrangian method) for scenarios of different dimensionality and environment complexity in Fig. \ref{fig:resultsAll}. 
In each plot, we ran the IOS-MP variants and the sampling-based methods in an anytime manner for 25 seconds, considering the best solution found over time and averaging over runs in 15 environments. 
For Optimization, we initialized the algorithm with a straight line path in configuration space from start to goal and a fixed number of trajectory points (20) and ran the algorithm until convergence for each of the 15 environments.
Optimization does not always succeed in finding a solution (it was successful in $86\%$ of the point robot scenarios represented here); because of this, we computed averages only across runs for which a collision-free solution was found and plot a $*$ for this average.
To display meaningful averages across the 15 environments for each graph, we defined the ``best solution'' for a particular environment as the shortest path found by any method at any time for that environment, and then averaged over ratios with respect to the best solution in each environment.

In Fig. \ref{fig:resultsAll}, both IOS-MP (BIT*) and IOS-MP (PRM*) outperform their sampling only counterparts in every case, and outperform optimization-based motion planning in most cases, with IOS-MP (BIT*) being the best all around method.
The results show that the sampling-based methods PRM* and BIT* perform relatively well in the lower dimensionality cases, especially where the environment is complex (i.e. many obstacle primitives), while local optimization performs increasingly well as the dimensionality increases.
By interleaving sampling-based methods and local optimization, IOS-MP gains the speed of both of these methods on these different types of problems.
When looking at the trend from left to right of Fig.\ \ref{fig:resultsAll}, dimensionality increases, and the percent of configuration space that is free also rises because the number and average radius of the obstacles is held constant.
Because local optimization is relatively more efficient at reducing the length of a path in higher dimensional configuration spaces that are more sparse, IOS-MP is particularly effective for these higher dimensional problems compared to the motion planners that only use sampling-based methods. 

\subsection{7-DOF Manipulation Tasks}
We demonstrate IOS-MP's efficacy in various 7-DOF manipulation tasks with the arm of a Fetch robot. 
\subsubsection{Motion Planning to a Goal Configuration}
In the first scenario (Fig. \ref{fig:fetch}), the Fetch robot must plan a motion from its start configuration, where its arm is beneath a ladder to grasp a tool, to a goal configuration, where its arm is above the ladder, as if handing off the tool to a person above.
The ladder and robot links were represented by 8 capsule primitives. In the optimization formulation for \eqref{eqn:PlanningFormulation}, we used inequality constraints for obstacles avoidance, to represent joint limits, and to guarantee the robot does not self-collide.

\begin{figure}[t]
  \centering
  \subfloat[PRM* Path]{\label{fig:fetchPRM}\includegraphics[width=0.35\linewidth]{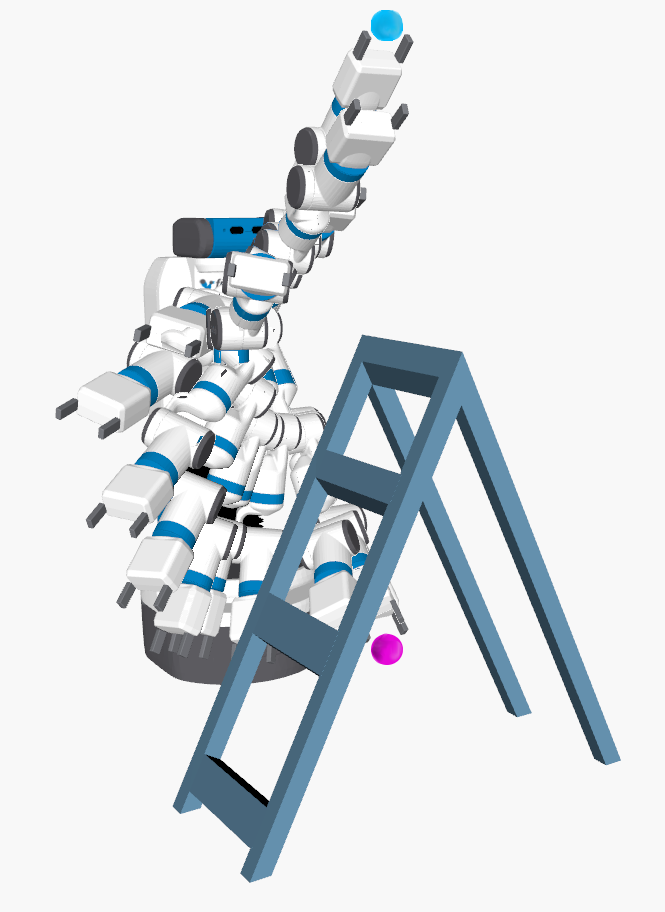}}  \hspace{0.25em}
  \subfloat[Optimized Path]{\label{fig:fetchOPT}\includegraphics[width=0.35\linewidth]{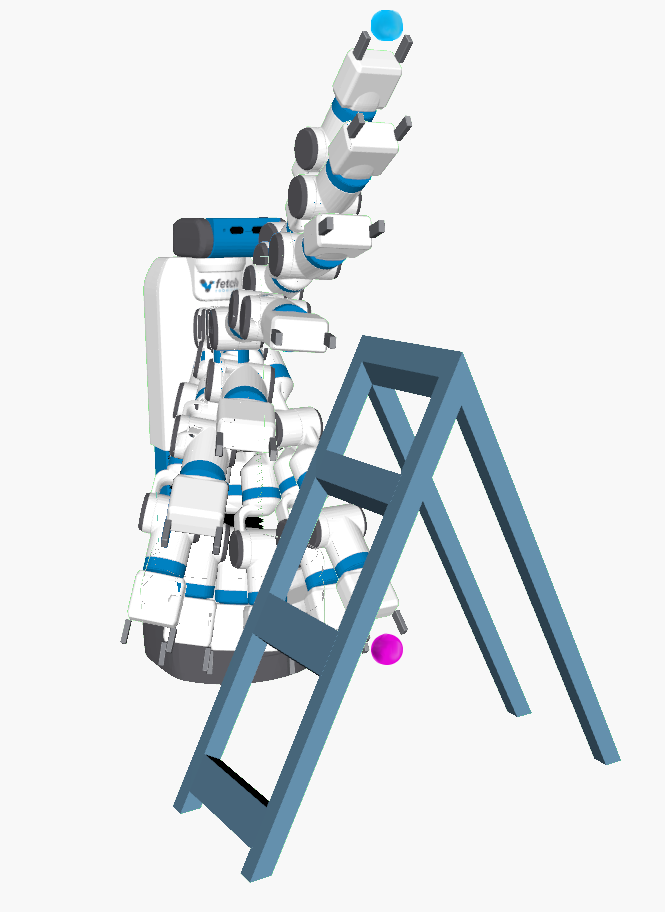}} \\
    \subfloat[Method Comparison]{\label{fig:fetchGraph}\includegraphics[width=0.7\linewidth]{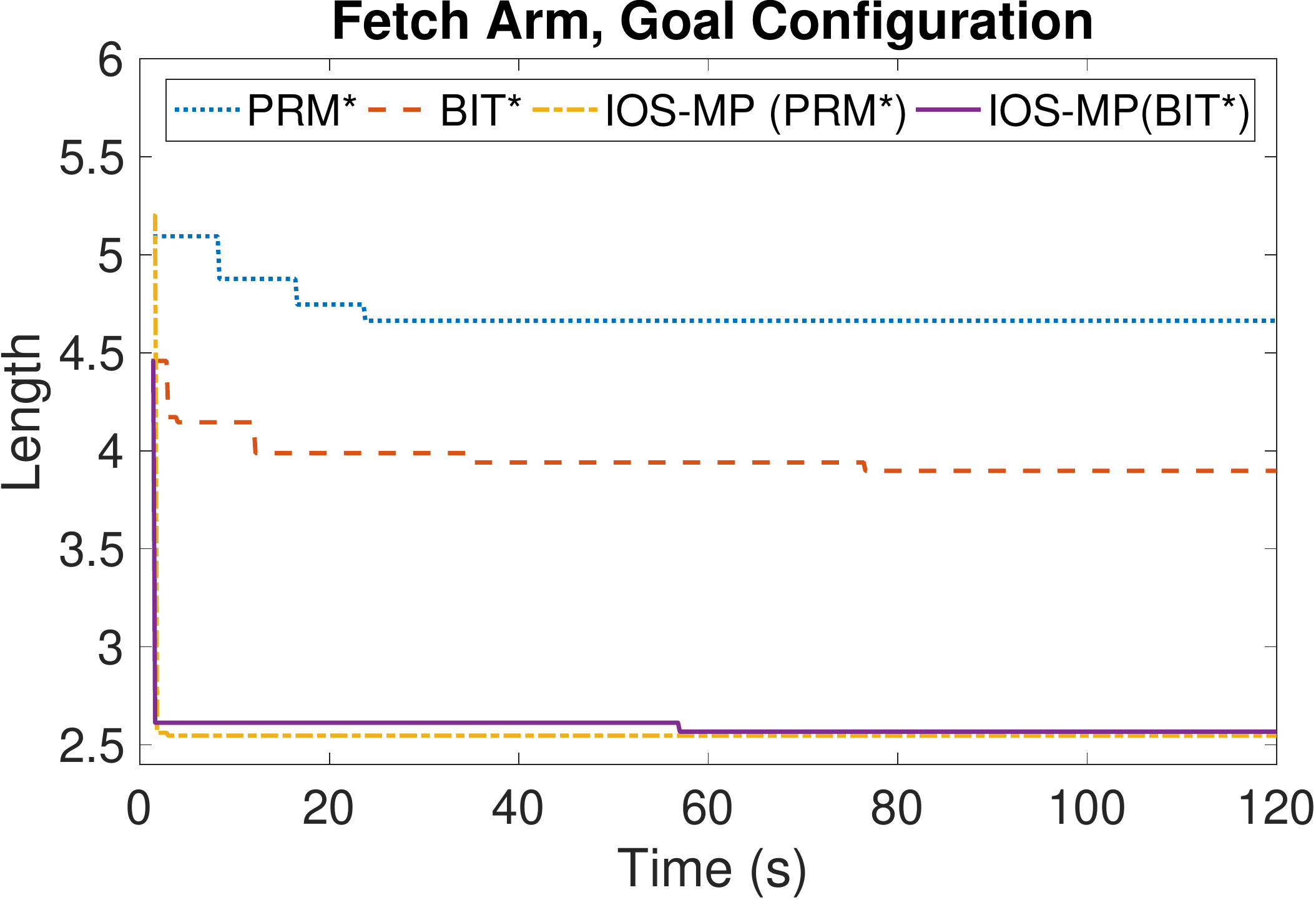}}
    \caption{(a-b) Visualization of the first Fetch robot arm manipulator task, start configuration in pink and goal configuration in blue, (a) with a path found by PRM* (b) and its optimized equivalent. (c) A comparison of the performance of the 4 methods for the Fetch task.}
    \label{fig:fetch}
\end{figure}

Fig. \ref{fig:fetchGraph} shows the results of applying PRM*, BIT*, IOS-MP (PRM*), and IOS-MP (BIT*) to this motion planning problem. The methods were run for 120 seconds, recording the path length based on Euclidean distance in configuration space of the best path found up to that time.
Additionally, we evaluated augmented Lagrangian local optimization, initialized with a 20-point straight line path in configuration space, but this method did not converge to a valid collision-free solution.

In this scenario, both IOS-MP (PRM*) and IOS-MP (BIT*) drastically outperform their sampling-only counterparts.
In just a couple of iterations, the loop of sampling-based methods and local optimization produces better plans than other methods produce in 120 seconds. 
The high dimensionality of the problem means that the optimization step of IOS-MP can provide a large benefit in a short time.
Fig. \ref{fig:fetch} illustrates (a) a path found during the graph expansion step of IOS-MP (PRM*), and (b) the path after the optimization step of the same iteration of IOS-MP. The optimization step of IOS-MP significantly reduces extraneous motion, and iterating with both a sampling-based step and optimization step allows for fast improvement toward a globally optimal solution.

\subsubsection{Motion Planning to a Workspace Goal Region}
In the second scenario, the Fetch robot is placed in a more cluttered scene (see Fig.\ref{fig:fetchRegion}) and must plan a path starting from a start configuration, in which it is reaching up as if to grab a tool from a person on the ladder, and moving such that its end-effector enters a goal \emph{region} in the workspace under the ladder.
We randomly perturb the location of the workspace goal region under the ladder and average the results of 80 runs.
The ladder, robot, and other obstacle geometries are being represented in this scene using 16 capsule obstacle primitives.
In this scenario, the goal is not a specific configuration, but a potentially infinite number of configurations, all configurations in which the end effector is within a workspace region.
In fact, in configuration space the goal region may be both non-convex and disconnected.
To solve this motion planning problem, the robot must consider many possible paths across multiple homotopic classes that may pass between different rungs or around the side of the ladder.

During execution, IOS-MP (PRM*) incrementally discovers and explores the goal region as the PRM* graph is constructed.
As configurations which lay within the goal region are sampled, they are added to the set of goal configurations to which the planner is attempting to connect to from the start configuration.
In the optimization formulation for this scenario, we add an additional inequality constraint, the end effector's proximity to the workspace goal region, which requires the end effector to remain within the workspace goal region during optimization, however the goal configuration is allowed to vary during the optimization as long as the constraint is respected at convergence.
Due to goal region restrictions in the OMPL 1.2.1 BIT* implementation, we only evaluate IOS-MP (PRM*) and PRM* for this scenario.
To evaluate the efficacy of sharing information from the optimizer with the roadmap we show results for which the optimized paths from IOS-MP are added back into the roadmap and results for which they are not.
To average multiple runs we show results beginning at the instant the first path was found.

\begin{figure}[t]
  \centering
  \subfloat[]{\label{fig:fetchR1}\includegraphics[width=0.33\linewidth]{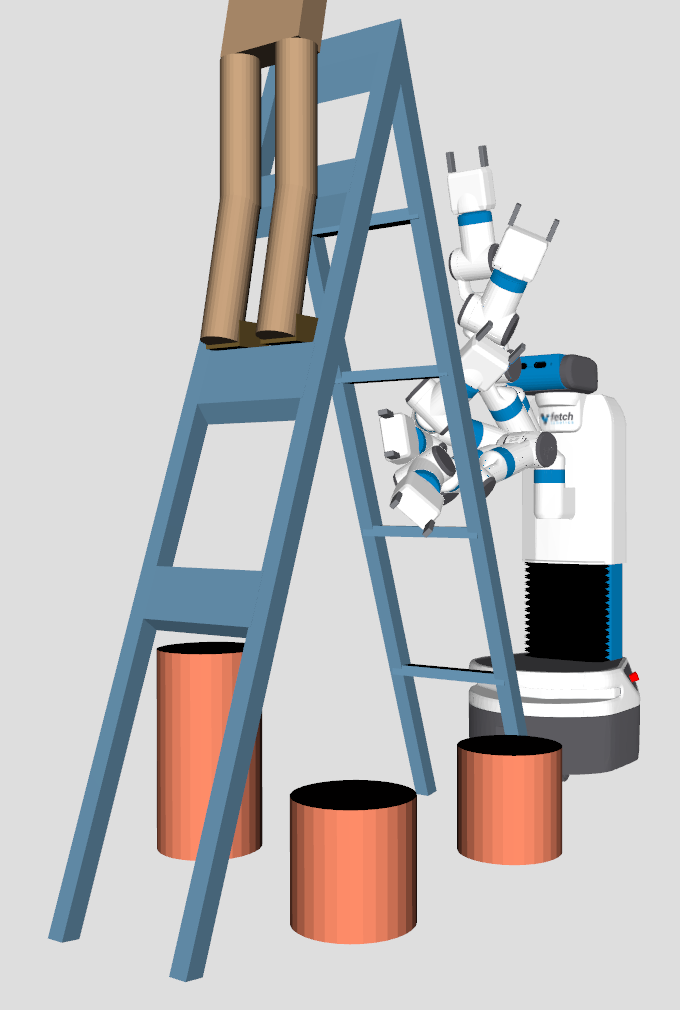}} \hspace{0.25em}
  \subfloat[]{\label{fig:fetchR2}\includegraphics[width=0.33\linewidth]{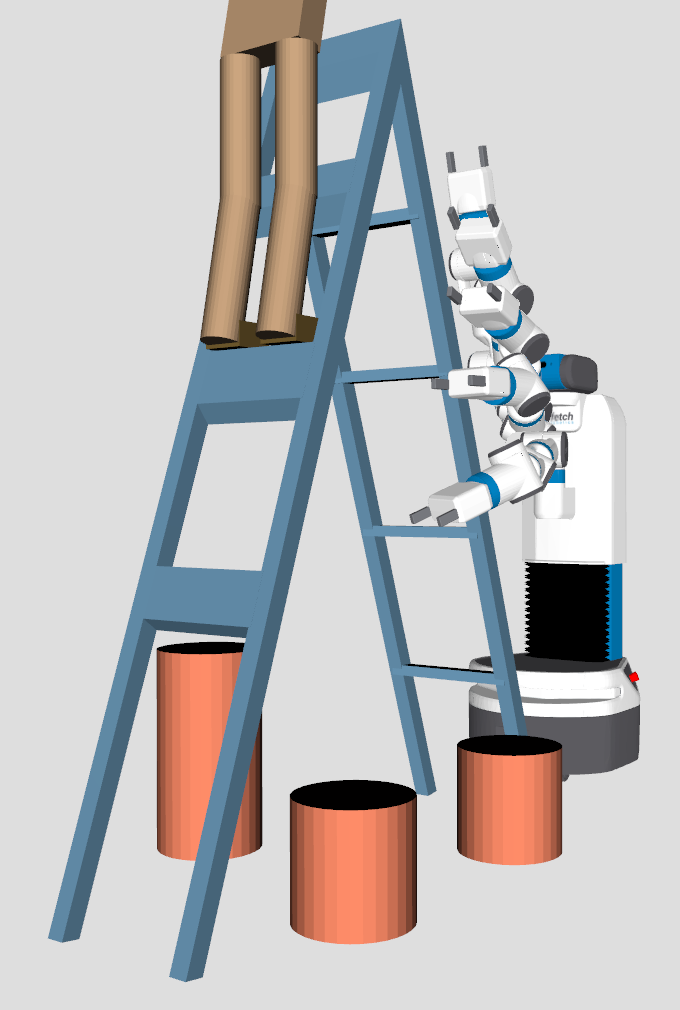}} \\
    \subfloat[]{\label{fig:fetchR3}\includegraphics[width=0.7\linewidth]{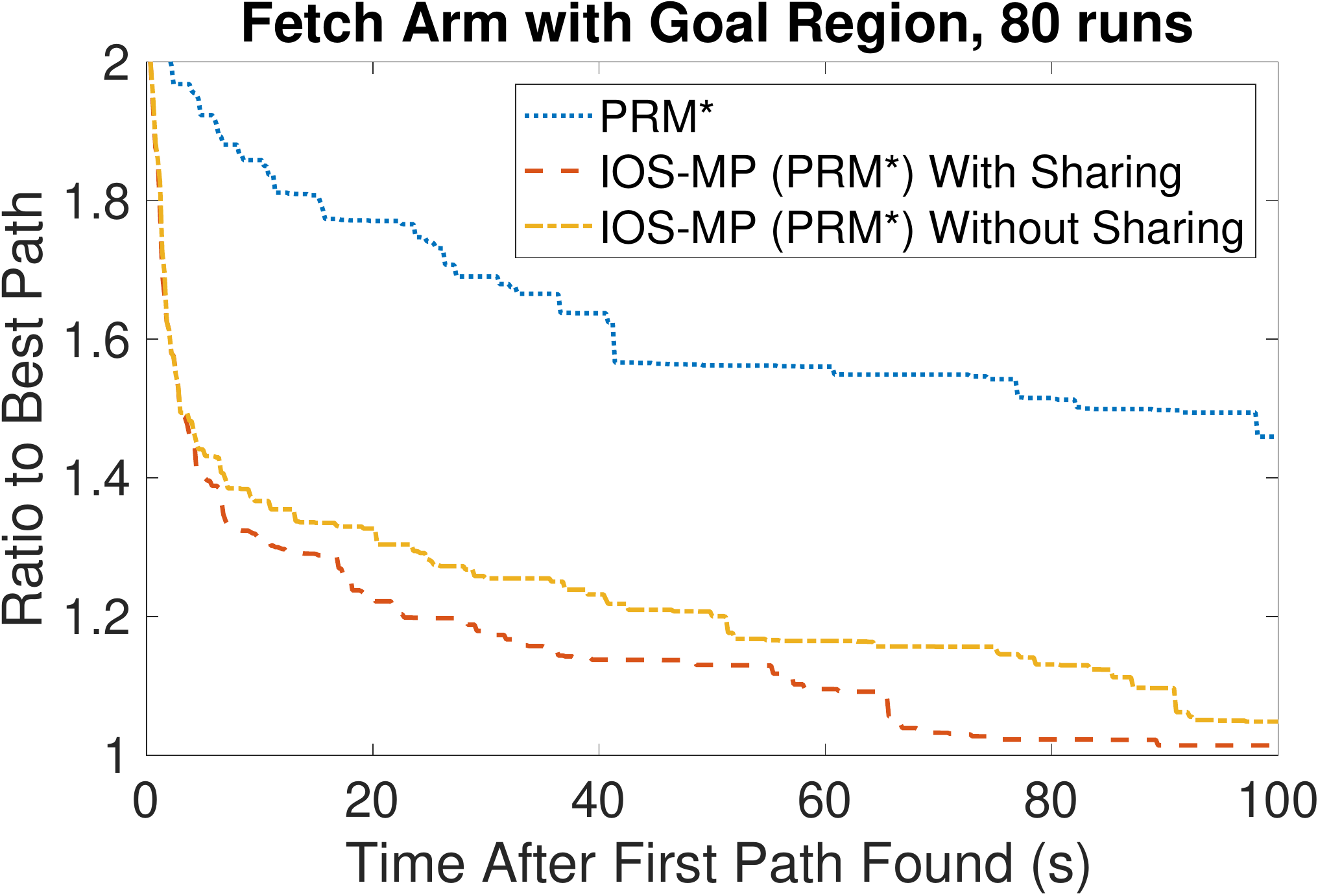}}
    \caption{(a-b) A task in which the Fetch robot must move its end effector from above its head to a region underneath the ladder. (a) A path found during an early iteration of IOS-MP in a non-ideal homotopic class between ladder rungs. (b) A path in a better homotopic class found in a later iteration of IOS-MP. (c) A comparison of the performance of IOS-MP (PRM*), with and without optimized path sharing, and PRM*. We average results over 80 runs in which the workspace goal region was randomly perturbed under the ladder.}
        \label{fig:fetchRegion}
\end{figure}

Fig. \ref{fig:fetchRegion}(a-b) depicts paths which exist in two separate homotopic classes, demonstrating the effectiveness of the global nature of the sampling-based planning while still benefiting from the optimization of the paths found.
Fig. \ref{fig:fetchRegion}(c) also shows that IOS-MP (PRM*) performs very well compared to the sampling-based planner alone, and demonstrates the substantial benefit of adding the optimized path back into the roadmap.

%%%%%%%%%%%%%%%%%%%%%%%%%%%%
% CONCLUSION
%%%%%%%%%%%%%%%%%%%%%%%%%%%%
% !TEX root =  Kuntz_IOSMP.tex

\section{Conclusion}

In this paper, we present a method designed to achieve the best of both local optimization and global sampling-based motion planning.
Through interleaving local optimization with global exploration, and sharing valuable information between the two, our method works in an anytime fashion, providing locally optimized solutions as of the most recent optimization step, while still providing global asymptotic optimality.
We demonstrate that our method, integrated with both PRM* and BIT*, performs well compared to local optimization or sampling-based methods alone in experiments of varying environmental complexity and dimensionality.
In the future, we plan to investigate integrating IOS-MP with other optimization methods, such as sequential quadratic programming, using cost functions other than path length, and using different obstacle primitives.

\bibliographystyle{plain}
\bibliography{references-fixed,references}

\end{document}